\documentclass[conference]{IEEEtran}
\pdfoutput=1
\IEEEoverridecommandlockouts
\usepackage[utf8]{inputenc}
\usepackage{amsmath,amssymb,amsfonts,amsthm}
\usepackage{graphicx}
\usepackage{booktabs}
\usepackage{multirow}
\usepackage{hyperref}
\usepackage{xcolor}
\usepackage{subcaption}
\usepackage{tabularx}
\usepackage{pgfplots}
\usepackage{tikz}
\usetikzlibrary{calc}
\usetikzlibrary{plotmarks}
\usepgfplotslibrary{fillbetween}
\pgfplotsset{compat=1.15}
\usepackage{algorithm}
\usepackage{algorithmic}
\floatstyle{ruled}
\restylefloat{algorithm}
\renewcommand{\REQUIRE}{\item[]\textbf{Input:}\;}
\renewcommand{\ENSURE}{\item[]\textbf{Output:}\;}
\newcommand{\STATEX}{\item[]}

\newtheorem{definition}{Definition}
\newtheorem{proposition}{Proposition}
\newtheorem{theorem}{Theorem}

\definecolor{pdtgreen}{RGB}{100,161,27}
\definecolor{pdtlightgreen}{RGB}{149,198,35}
\definecolor{pdtblue}{RGB}{11,93,174}
\definecolor{pdtpurple}{RGB}{106,20,125}
\definecolor{pdtred}{RGB}{145,0,33}
\definecolor{pdtlightblue}{RGB}{59,175,236}

\title{GNN-DIP: Neural Corridor Selection for\\Decomposition-Based Motion Planning}

\author{Peng Xie, Yanliang Huang, Wenyuan Wu, Amr Alanwar
  \thanks{Peng Xie, Yanliang Huang, Wenyuan Wu and Amr Alanwar are with the TUM School of Computation,
    Information and Technology, Department of Computer Engineering,
    Technical University of Munich, 74076 Heilbronn, Germany.
    \texttt{(e-mail: p.xie@tum.de, yanliang.huang@tum.de, wenyuan.wu@tum.de, alanwar@tum.de)}}%
}

\renewcommand{\IEEEtitletopspaceextra}{20pt}

\begin{document}
\maketitle

\begin{abstract}
Motion planning through narrow passages remains a core challenge: sampling-based planners rarely place samples inside these critical regions, and connections between the few samples that land there hug obstacle boundaries and are frequently rejected by collision checking. Decomposition-based planners resolve both issues by partitioning free space into convex cells---every passage is captured exactly as a cell boundary, and any path within a cell is collision-free by construction. However, the number of candidate corridors through the cell graph grows combinatorially with environment complexity, making corridor selection the bottleneck. We present GNN-DIP, a framework that integrates a Graph Neural Network (GNN) with a two-phase Decomposition-Informed Planner (DIP). The GNN predicts portal scores on the cell adjacency graph to bias corridor search toward near-optimal regions while preserving completeness; corridors are then evaluated exactly in 2D (Funnel algorithm on Constrained Delaunay Triangulation) and near-optimally in 3D (portal-face sampling on Slab decomposition). Benchmarks on 2D narrow-passage scenarios, 3D bottleneck environments with up to 246 obstacles, and dynamic 2D settings show that GNN-DIP achieves 99--100\% success rates with 2--280$\times$ speedup over sampling-based baselines.
\end{abstract}

\section{Introduction}
\label{sec:introduction}

Motion planning in cluttered environments is a fundamental problem in robotics. Among its open challenges, narrow passages stand out: in environments with tight doorways, narrow gaps, or double enclosures, the only feasible routes pass through regions occupying a vanishingly small fraction of the space, making their reliable discovery essential for any practical planner.

Sampling-based planners~\cite{zhang2024review}, including RRT*~\cite{karaman2011sampling}, BIT*~\cite{gammell2015batch}, Informed RRT* (iRRT*)~\cite{gammell2014informed}, APT*~\cite{zhang2025apt}, and their variants, provide asymptotic optimality guarantees but face a dual challenge in narrow passages. First, the probability of randomly sampling inside a passage of width~$\varepsilon$ scales as~$\varepsilon^d$, so narrow gaps become exponentially harder to discover in higher dimensions. Second, even when a sample lands inside a narrow passage, the straight-line segments connecting it to neighboring samples run close to obstacle boundaries, complicating collision checking at finite resolution. These challenges compound, so denser sampling or finer collision checking yields diminishing returns.

Decomposition-based planners address both challenges structurally. By partitioning the free space $\mathcal{F}$ into convex cells and constructing a cell adjacency graph, every passage---no matter how narrow---is represented exactly as a cell boundary (portal), so no bottleneck region is missed. Moreover, any path within a convex cell is collision-free by definition, eliminating resolution-dependent collision checking entirely. The continuous planning problem thus reduces to a discrete corridor search followed by local path optimization within each corridor.

The principal limitation of decomposition-based approaches is the corridor selection problem: the number of distinct corridors from start to goal grows combinatorially with the number of cells. In environments with hundreds to thousands of cells, the $k$-shortest path search via Yen's algorithm~\cite{yen1971finding} needs accurate edge weights, yet the default centroid-distance heuristic $w(c_i, c_j) = \|z_i - z_j\|$ correlates poorly with actual path cost for elongated cells, narrow passages, or asymmetric obstacle layouts.

We address the corridor selection problem by training a GNN on the cell adjacency graph to predict portal-level scores. Each portal $p_{ij}$ (the shared boundary between adjacent cells $c_i$ and $c_j$) receives a score $s_{ij} \in [0,1]$ indicating the probability that it lies on a near-optimal corridor. These scores modulate edge weights via $w(c_i, c_j) = d(c_i, c_j) \cdot \exp(-\beta \cdot s_{ij})$, concentrating the $k$-shortest path search on promising regions. Importantly, the modulation is continuous---no corridors are pruned---thereby preserving completeness while improving the quality of initial corridor candidates.

Fig.~\ref{fig:pipeline} illustrates the GNN-DIP ($\mathcal{G}$-DIP) pipeline on a labyrinth with 50 polygon obstacles: CDT produces 385 triangular cells, the GNN isolates a 73-cell corridor containing the near-optimal path (${\sim}81\%$ search-space reduction), parallel Funnel evaluation yields the initial solution in under 20\,ms, and Phase~2 refines it within a shrinking informed ellipsoid.

\begin{figure*}[t]
\centering
\includegraphics[width=0.88\textwidth]{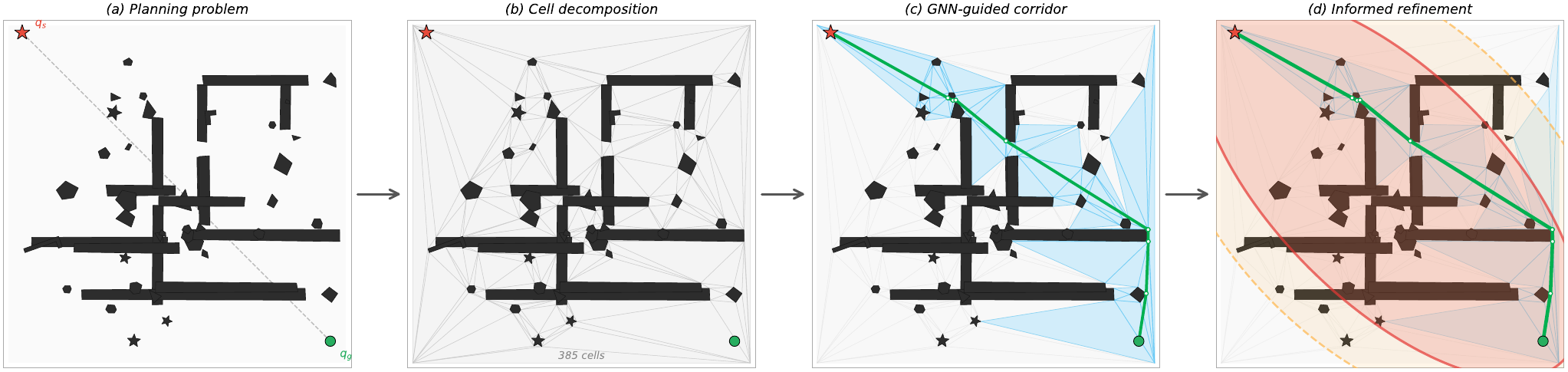}
\caption{GNN-DIP pipeline on a labyrinth with polygon obstacles. (a)~Planning problem with start ($\star$) and goal ($\circ$). (b)~CDT decomposes the free space into 385 triangular cells. (c)~The GNN selects a corridor of 73 cells (light blue), and the Funnel algorithm computes the initial path (green) in parallel across corridor candidates. (d)~Phase~2 refines the solution within a shrinking informed ellipsoid (orange dashed $\to$ red solid).}
\label{fig:pipeline}
\end{figure*}

The contributions of this paper are:
\begin{enumerate}
\item A GNN-based portal scoring framework on cell adjacency graphs, with engineered node/edge features and a training pipeline using focal loss and multi-label supervision (Section~\ref{sec:gnn}).
\item A two-phase Decomposition-Informed Planner (DIP) that combines GNN-guided corridor search with an \emph{adaptation of informed-set pruning}~\cite{gammell2014informed, strub2020pdt} to the cell adjacency graph and Funnel-based corridor evaluation, with proofs of completeness and convergence (Section~\ref{sec:dip}).
\item Comprehensive experiments spanning 2D (310 maps, 18 scenarios), 3D bottleneck environments (4 scenarios, 50 PDT runs each), and dynamic 2D environments (100 planning instances), demonstrating 2--280$\times$ speedup, 99--100\% success rates, and collision safety by construction (Section~\ref{sec:experiments}).
\end{enumerate}

\section{Preliminaries and Related Work}
\label{sec:related_work}

\subsection{Decomposition-Based Motion Planning}

Cell decomposition methods partition the free space into simple regions and plan over the resulting adjacency graph. Exact cell decomposition~\cite{latombe1991robot} constructs cells whose union equals the free space, whereas approximate methods employ regular grids or adaptive subdivisions. Constrained Delaunay Triangulation (CDT)~\cite{shewchuk1996triangle} with parity-based face classification provides a well-studied 2D decomposition with robust implementations in CGAL~\cite{cgal2024}. The Funnel algorithm~\cite{lee1984euclidean} computes the exact Euclidean shortest path through a sequence of adjacent convex polygons in $O(n)$ time, where $n$ is the total number of portal endpoints.

In 3D, exact convex decomposition of general polyhedral free space is computationally expensive. For environments with axis-aligned box (AABB) obstacles, a Slab decomposition exploits obstacle face alignment to produce an exact convex decomposition in near-linear time. Strub and Gammell~\cite{strub2020pdt} introduced AIT* and EIT*, which integrate lazy search and informed sampling. Recent advances include tree-grafting bidirectional planners~\cite{zhang25graft}, estimated informed anytime search with adaptive sampling~\cite{zhang2025TASE}, and genetic programming heuristics for informed trees~\cite{zhang2025git}. The Planner Developer Tools (PDT) framework provides standardized benchmarking infrastructure for OMPL-compatible planners~\cite{sucan2012ompl}.

\subsection{Relation to Graphs of Convex Sets}
\label{sec:gcs}

Closest to our decomposition-plus-adjacency-graph formulation is the Graphs of Convex Sets (GCS) framework~\cite{marcucci2023gcs, marcucci2024shortest}, which also casts planning as a shortest-path problem over a graph whose vertices are convex free-space regions. Both reduce planning to selecting a sequence of convex sets, but differ in how that selection is resolved. GCS embeds the discrete selection and the continuous trajectory into a single mixed-integer convex program solved via a tight convex relaxation, achieving global optimality at the cost of a centralized optimization that scales with the graph. GNN-DIP instead \emph{decouples} the two: a GNN gives learned, millisecond-scale guidance that biases the $k$-shortest corridor search (Sec.~\ref{sec:dip}), and each corridor's path is recovered exactly (Funnel, 2D) or near-exactly (portal-face sampling, 3D). We trade GCS's global-optimality guarantee for far faster initial solutions, anytime refinement, completeness, and high-frequency replanning, while both inherit collision-safety-by-construction from the decomposition. Finally, for the narrow-passage regime we target, exact decomposition offers a further guarantee: every passage appears as a cell boundary, so none is missed. The convex regions underlying GCS, by contrast, are inflated from seed configurations~\cite{deits2015iris} and may fail to cover very thin passages, reintroducing at the region-generation stage the same seeding difficulty that exact decomposition eliminates.

\subsection{Learning and GNNs for Motion Planning}

Neural approaches to motion planning include learned samplers~\cite{ichter2018learning, wang2020neural}, conditional generative models~\cite{ichter2020learned}, and reinforcement learning~\cite{chen2020learning}, all operating in continuous configuration space without exploiting cell decomposition structure. GNNs have been applied to roadmap graphs for collision prediction~\cite{yu2021reducing}, neural planning~\cite{qureshi2021nerp}, and edge cost learning~\cite{zang2023graphmp}. These operate on roadmap graphs (nodes = configurations, edges = local paths), whereas our GNN operates on the cell adjacency graph (nodes = free-space cells, edges = portals)---a fundamentally different and more compact representation. Informed approaches exploit ellipsoidal~\cite{gammell2014informed} or zonotope~\cite{xie2025informed} subsets; our GNN guidance is complementary, biasing corridor search before any solution is found.

\subsection{Problem Formulation}
\label{sec:preliminaries}

This section formalizes the key components of the proposed framework. Throughout, $\mathcal{W} \subseteq \mathbb{R}^d$ ($d \in \{2,3\}$) denotes the workspace.

\begin{definition}[Motion Planning Problem]
\label{def:mpp}
Given a workspace $\mathcal{W}$ with obstacles $\mathcal{O} = \{O_1, \ldots, O_m\}$ (arbitrary simple polygons in 2D, axis-aligned boxes in 3D), the free space is $\mathcal{F} = \mathcal{W} \setminus \bigcup_{i=1}^{m} O_i$. Given start and goal configurations $q_s, q_g \in \mathcal{F}$, the \emph{optimal motion planning problem} seeks a continuous path $\sigma^* : [0,1] \to \mathcal{F}$ with $\sigma(0) = q_s$, $\sigma(1) = q_g$, minimizing the path length $\ell(\sigma) = \int_0^1 \|\dot{\sigma}(t)\| \, dt$:
\begin{equation}
\sigma^* = \arg\min_{\sigma} \; \ell(\sigma).
\end{equation}
\end{definition}

\begin{definition}[Free-Space Decomposition]
\label{def:decomp}
A \emph{free-space decomposition} of $\mathcal{F}$ is a finite collection of closed convex cells $\mathcal{C} = \{c_1, \ldots, c_n\}$ such that (i)~$\bigcup_{i=1}^{n} c_i = \overline{\mathcal{F}}$ (coverage), (ii)~$\mathrm{int}(c_i) \cap \mathrm{int}(c_j) = \emptyset$ for $i \neq j$ (non-overlapping interiors), and (iii)~each $c_i$ is convex.
In 2D, we use Constrained Delaunay Triangulation (CDT) with obstacle edges as constraints and parity-based classification to identify free faces. In 3D with axis-aligned box obstacles, we employ a \emph{Slab convex decomposition}: obstacle face coordinates define axis-aligned splitting planes; obstacle cells are removed and adjacent free cells are greedily merged, producing a compact set of convex boxes.
\end{definition}

\begin{definition}[Cell Adjacency Graph]
\label{def:cag}
The \emph{cell adjacency graph} $G = (\mathcal{C}, \mathcal{P})$ is defined over the free-space cells of Definition~\ref{def:decomp}, with cells as nodes and portals as edges. A \emph{portal} $p_{ij} \in \mathcal{P}$ exists between cells $c_i$ and $c_j$ if they share a $(d{-}1)$-dimensional face (an edge segment in 2D, a rectangular face in 3D). Each portal $p_{ij}$ is characterized by its geometric attributes: endpoints $\{a_{ij}, b_{ij}\}$ and midpoint $m_{ij}$, with size measure $\lambda_{ij}$ defined as the segment length $\|a_{ij} - b_{ij}\|$ in 2D or the face area in 3D.
\end{definition}

\begin{definition}[Corridor]
\label{def:corridor}
A \emph{corridor} $\pi = (c_{i_1}, c_{i_2}, \ldots, c_{i_L})$ is a path in $G$ from the cell containing $q_s$ to the cell containing $q_g$. The corridor defines a connected region $\mathcal{R}_\pi = \bigcup_{\ell=1}^{L} c_{i_\ell}$ through which a collision-free path must pass. The \emph{corridor cost} $\ell(\pi)$ is the length of the shortest path in $\mathcal{F}$ that traverses the cells of $\pi$ in order.
\end{definition}

\section{GNN Portal Scoring}
\label{sec:gnn}

The corridor selection problem can be formulated as an edge classification task on the cell adjacency graph: for each portal $p_{ij}$, predict whether it lies on a near-optimal corridor. A GNN is trained for this task, and its scores bias edge weights in the corridor search.

\subsection{Graph Representation and Feature Engineering}
\label{sec:features}

The graph $G = (\mathcal{C}, \mathcal{P})$ is represented as directed with bidirectional edges (each portal appears twice); each cell $c_i$ carries node features $\mathbf{x}_i \in \mathbb{R}^{d_n}$ and each portal $p_{ij}$ edge features $\mathbf{e}_{ij} \in \mathbb{R}^{d_e}$.

\subsubsection{Node Features ($d_n = 11$ in 2D) and Edge Features ($d_e = 9$ in 2D)}

Table~\ref{tab:features} lists the complete 2D feature set. Node features encode cell geometry (area, aspect ratio $\rho_i = e_{\max}/e_{\min}$), spatial relationships to the query (distances to start, goal, and the start--goal line $d_\perp$), and role indicators. Edge features encode portal geometry, spatial context, and inter-cell relationships. The relative angle $\theta_{ij} = \angle(p_{ij}) - \angle(\overrightarrow{q_s q_g})$ captures alignment between the portal and the global query direction. For the 3D Slab decomposition, features are extended to $d_n{=}14$ and $d_e{=}13$ by adding volumetric cell descriptors (volume, size along each axis, clearance) and 3D portal attributes (face area, portal height, normal axis).

\begin{table}[t]
\centering
\caption{GNN input features: node (cell) and edge (portal).}
\label{tab:features}
{\scriptsize
\setlength{\tabcolsep}{0.4em}
\begin{tabular}{clcl}
\toprule
\multicolumn{2}{c}{\textbf{Node features} ($d_n = 11$)} & \multicolumn{2}{c}{\textbf{Edge features} ($d_e = 9$)} \\
\midrule
$A_i$ & Cell area & $\lambda_{ij}$ & Portal length \\
$z_i^x, z_i^y$ & Centroid coords & $m_{ij}^x, m_{ij}^y$ & Midpoint coords \\
$d(z_i, q_s)$ & Dist.\ to start & $d(m_{ij}, q_s)$ & Dist.\ to start \\
$d(z_i, q_g)$ & Dist.\ to goal & $d(m_{ij}, q_g)$ & Dist.\ to goal \\
$d_\perp(z_i, \overline{q_s q_g})$ & Dist.\ to $sg$-line & $d_\perp(m_{ij}, \overline{q_s q_g})$ & Dist.\ to $sg$-line \\
$\rho_i$ & Aspect ratio & $\theta_{ij}$ & Relative angle \\
$\mathbb{1}[i{=}i_s]$ & Start cell flag & $\|z_i - z_j\|$ & Cell--cell dist. \\
$\mathbb{1}[i{=}i_g]$ & Goal cell flag & $\kappa_{ij}$ & Portal clearance \\
$|\mathcal{N}(i)|$ & Neighbor count & & \\
$\kappa_i$ & Obstacle clearance & & \\
\bottomrule
\end{tabular}
}
\end{table}

\subsection{GNN Architecture}
\label{sec:architecture}

The architecture follows an encode--process--decode pattern. Raw node features are projected to hidden dimension $h=128$ via a linear layer with batch normalization (BN) and ReLU. In 2D, three GCN layers~\cite{kipf2017semi} with symmetric normalization, BN, and dropout ($p{=}0.15$) process the node embeddings; layers 2--3 use two-layer-skip residual connections. For 3D, GCN is replaced by GATv2 (3 layers, 4 attention heads) to better capture the irregular connectivity of Slab cells; the remaining architecture is identical.

For each portal $p_{ij}$, the score is predicted by an MLP ($\mathbb{R}^{2h+d_e} \to h \to 32 \to 1$, ReLU activations) on the concatenation of endpoint embeddings and edge features:
\begin{equation}
\hat{s}_{ij} = \mathrm{sigm}\!\left(\text{MLP}\!\left([\mathbf{h}_i^{(3)} \| \mathbf{h}_j^{(3)} \| \text{BN}_e(\mathbf{e}_{ij})]\right)\right),
\label{eq:edge_score}
\end{equation}
where $\mathrm{sigm}(\cdot) = 1/(1+e^{-x})$ is the sigmoid function (distinguished from path~$\sigma$).

\subsection{Training}
\label{sec:training}

\subsubsection{Label Generation}

Training labels are generated from OMPL baselines (iRRT*, AIT*, BIT*, RRT*) run on each map with a sufficient time budget; portals on near-optimal corridors of the collected paths are labeled positive. Let $\sigma_\text{ref}$ denote the shortest path found across all baseline runs:
\begin{equation}
y_{ij} = \begin{cases} 1 & \exists\, \sigma : \ell(\sigma) \leq (1{+}\epsilon)\,\ell(\sigma_\text{ref}),\; p_{ij} \in \sigma \\ 0 & \text{otherwise} \end{cases}
\end{equation}
where $\epsilon = 0.1$ (10\% suboptimality threshold). Since $\sigma_\text{ref}$ is the best solution found rather than the true optimum $\sigma^*$, label noise may arise when baselines have not converged; in practice, running four planners with a 10\,s budget per map yields near-optimal references. This multi-label scheme assigns positive labels to portals on any near-optimal corridor, capturing the multiplicity of good solutions. The resulting label distribution is severely imbalanced: typically less than 5\% of portals are positive.

\subsubsection{Focal Loss}

To address the extreme class imbalance, focal loss~\cite{lin2017focal} is adopted:
\begin{equation}
\mathcal{L}_\text{FL}(\hat{s}, y) = -\alpha_t (1 - p_t)^\gamma \log(p_t),
\label{eq:focal}
\end{equation}
where $p_t = \hat{s} \cdot y + (1 - \hat{s})(1 - y)$ is the model's estimated probability for the true class, $\alpha_t = \alpha y + (1-\alpha)(1-y)$ balances positive/negative contributions, and $\gamma$ is the focusing parameter. We use $\alpha = 0.85$ and $\gamma = 2.0$. The $(1-p_t)^\gamma$ factor down-weights well-classified negatives, focusing gradient updates on hard positives---the critical portals that the model initially misclassifies.

\subsubsection{Optimization}

Training employs Adam~\cite{kingma2015adam} (initial learning rate $10^{-3}$, weight decay $10^{-4}$) with cosine annealing and early stopping based on validation F1 score (patience 30 epochs). Data are split using stratified sampling (20\% validation ratio). The model ($\sim$150K parameters) trains in under 5 minutes on a single GPU.

\section{Decomposition-Informed Planner}
\label{sec:dip}

The Decomposition-Informed Planner (DIP) operates in two phases on the cell adjacency graph $G$. Phase~1 performs GNN-guided $k$-shortest corridor search with corridor evaluation; Phase~2 refines the search using an informed ellipsoid derived from the current best solution.

\subsection{GNN-Guided Edge Weight Integration}

Given GNN-predicted portal scores $\{\hat{s}_{ij}\}$, we define modified edge weights:
\begin{equation}
w_\text{GNN}(c_i, c_j) = d(c_i, c_j) \cdot \exp(-\beta \cdot \hat{s}_{ij}),
\label{eq:gnn_weight}
\end{equation}
where $d(c_i, c_j) = \|z_i - z_j\|$ is the centroid distance and $\beta > 0$ is a temperature parameter (we use $\beta = 3.0$). \begin{proposition}[Properties of GNN Edge Weights]
\label{prop:weights}
The weight~\eqref{eq:gnn_weight} is strictly positive and continuous; it recovers the centroid-distance baseline when $\hat{s}_{ij} = 0$ (graceful degradation); and higher scores yield lower weights, concentrating the $k$-shortest search on predicted near-optimal portals.
\end{proposition}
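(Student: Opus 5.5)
We verify the three claims of Proposition~\ref{prop:weights} in turn, treating $w_\text{GNN}$ as a function of the pair $(d,\hat{s}) = (d(c_i,c_j),\hat{s}_{ij})$ on the domain $(0,\infty)\times[0,1]$. This range is justified because $\hat{s}_{ij}$ is the output of the sigmoid in~\eqref{eq:edge_score}.

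\textbf{Positivity.} We first show that $d(c_i,c_j) = \|z_i - z_j\| > 0$ for every portal $p_{ij}$. By Definition~\ref{def:decomp}(ii), adjacent cells $c_i \neq c_j$ are closed convex sets with disjoint nonempty interiors, since each cell is full-dimensional (a triangle in 2D, a box in 3D). By Definition~\ref{def:cag}, they share a $(d{-}1)$-dimensional face, which lies in a separating hyperplane $H$. The centroid of a full-dimensional convex body lies in its interior. Hence $z_i$ and $z_j$ lie strictly on opposite sides of $H$, and so $z_i \neq z_j$. The factor $\exp(-\beta\hat{s}_{ij})$ is also strictly positive, because the exponential is positive on all of $\mathbb{R}$. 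The product is therefore strictly positive. Moreover, since $\hat{s}_{ij} \le 1$, we obtain the uniform lower bound $w_\text{GNN} \ge e^{-\beta} d(c_i,c_j) > 0$. This bound is the form needed for Dijkstra/Yen's algorithm to be well defined and for completeness arguments, because it ensures that no zero-cost cycles arise.

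\textbf{Continuity.} The map $(d,\hat{s}) \mapsto d\,e^{-\beta\hat{s}}$ is a product of continuous functions, hence continuous; it is in fact smooth. If continuity in the network's input features is also intended, it follows by composition. The GNN consists of linear maps, batch normalization at inference (an affine map), ReLU, and the sigmoid, all of which are continuous. The centroid distance is continuous in the centroid coordinates.

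\textbf{Degradation and monotonicity.} Setting $\hat{s}_{ij} = 0$ gives $e^0 = 1$, so $w_\text{GNN} = \|z_i - z_j\|$, which is exactly the centroid baseline. For monotonicity we compute
\begin{equation*}
\partial w_\text{GNN} / \partial \hat{s}_{ij} = -\beta\, d(c_i,c_j)\, e^{-\beta\hat{s}_{ij}} < 0,
\end{equation*}
which is strictly negative because $\beta > 0$ and $d > 0$ by the first step. So $w_\text{GNN}$ is strictly decreasing in the score. It follows that every corridor's total weight is nonincreasing in each of its portal scores. Raising the scores along a corridor lowers its rank in the $k$-shortest ordering relative to corridors whose scores are unchanged. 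This is the precise meaning we give to ``concentrating the search''. Finally, the weights stay within the bounded ratio $[e^{-\beta},1]$ of the baseline, so no edge is removed from the graph.

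\textbf{Main obstacle.} The only nontrivial point is $d(c_i,c_j) > 0$, which rests on the separating-hyperplane argument above. Everything else is elementary calculus. We also note that the informal phrase about concentrating the search can only be made rigorous in the relative-ranking sense stated above, and we would phrase that step of the proof accordingly.
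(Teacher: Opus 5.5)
Your proposal is correct and follows essentially the same route as the paper, which states the proposition without proof because every property can be read directly off $w_\text{GNN} = d\,e^{-\beta\hat{s}}$: the exponential is positive, the product is continuous, $e^0 = 1$, and $\partial w/\partial\hat{s} = -\beta d\, e^{-\beta\hat{s}} < 0$. Your one addition, that $d(c_i,c_j) > 0$, fills in the step the paper tacitly assumes; you do not even need the separating hyperplane for it, since $z_i \in \mathrm{int}(c_i)$, $z_j \in \mathrm{int}(c_j)$, and these interiors are disjoint.
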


\subsection{Phase 1: $k$-Shortest Corridor Search}

Phase~1 applies Yen's algorithm~\cite{yen1971finding} to find $k$ shortest paths in $G$ from the start cell $c_s$ to the goal cell $c_g$, using edge weights $w_\text{GNN}$ when available and centroid distances $d(\cdot, \cdot)$ otherwise. Each corridor is evaluated using the Funnel algorithm (2D) or portal-face sampling with layered-graph DP (3D); the best solution provides an initial cost bound $c_\text{best}^{(1)}$ that seeds Phase~2.

\subsection{Phase 2: Informed Ellipsoid Corridor Refinement}

\begin{definition}[Informed Ellipsoid]
\label{def:ellipsoid}
Given the current best cost $c_\text{best}$, the \emph{informed ellipsoid} is:
\begin{equation}
\mathcal{E}(c_\text{best}) = \{x \in \mathbb{R}^d : \|x - q_s\| + \|x - q_g\| \leq c_\text{best}\}.
\end{equation}
A portal $p_{ij}$ is \emph{informative} if it intersects the ellipsoid, i.e., $p_{ij} \cap \mathcal{E}(c_\text{best}) \neq \emptyset$, or equivalently $\min_{x \in p_{ij}} (\|x - q_s\| + \|x - q_g\|) \leq c_\text{best}$.
\end{definition}

The set in Definition~\ref{def:ellipsoid} is the admissible $L^2$ informed set of Informed RRT*~\cite{gammell2014informed}, and tightening it as $c_\text{best}$ decreases mirrors the informed pruning of AIT*/EIT*~\cite{strub2020pdt}. Our contribution is not the informed set itself but its transfer to the decomposition setting: we prune \emph{portals of the cell adjacency graph} rather than sample configurations directly, and combine it with GNN-scored corridors so refinement concentrates on the corridors the network favors---yielding deterministic anytime behavior on the discrete graph, unlike sampling-based informed methods that must still sample and connect within the set.

Phase~2 iteratively re-runs the $k$-shortest path search on $G$, restricted to portals inside $\mathcal{E}(c_\text{best})$. Only corridors not yet contained in the evaluated set $\mathcal{S}$ are passed to the corridor evaluator (Funnel in 2D, portal-face sampling in 3D). When a better corridor is found, the ellipsoid shrinks accordingly; otherwise, the corridor budget $k'$ is doubled up to $4k$, after which the loop terminates.

\begin{algorithm}[t]
\caption{GNN-DIP: Decomposition-Informed Planner}
\label{alg:dip}
\footnotesize
\begin{algorithmic}[1]
\REQUIRE $\mathcal{W}$, $q_s$, $q_g$, GNN model $f_\theta$, corridor budget $k$, timeout $T$
\ENSURE Best path $\sigma^*$ and cost $c^*$
\STATE Decompose $\mathcal{F}$ into cells; build $G = (\mathcal{C}, \mathcal{P})$
\STATE $\{\hat{s}_{ij}\} \leftarrow f_\theta(G)$
\STATE $w_{ij} \leftarrow d(c_i, c_j) \cdot \exp(-\beta \cdot \hat{s}_{ij})$, ~$\forall\, (c_i, c_j) \in \mathcal{P}$
\STATE $c^* \leftarrow \infty$;~~$\mathcal{S} \leftarrow \emptyset$
\STATE $\Pi_k \leftarrow \text{Yen}(G, c_s, c_g, k, w)$
\FOR{$\pi \in \Pi_k$}
  \STATE $(\ell_\pi, \sigma_\pi) \leftarrow \text{Eval}(\pi, q_s, q_g)$;~~$\mathcal{S} \leftarrow \mathcal{S} \cup \{\pi\}$
  \IF{$\ell_\pi < c^*$}
    \STATE $c^* \leftarrow \ell_\pi$;~~$\sigma^* \leftarrow \sigma_\pi$
  \ENDIF
\ENDFOR
\STATEX
\STATE $k' \leftarrow k$
\WHILE{elapsed time $< T$}
  \STATE $\mathcal{P}' \leftarrow \{p_{ij} \in \mathcal{P} : \min_{x \in p_{ij}} (\|x - q_s\| + \|x - q_g\|) \leq c^*\}$
  \STATE $\Pi' \leftarrow \text{Yen}(G|_{\mathcal{P}'}, c_s, c_g, k', w)$
  \STATE $\text{improved} \leftarrow \text{false}$
  \FOR{$\pi \in \Pi' \setminus \mathcal{S}$}
    \STATE $(\ell_\pi, \sigma_\pi) \leftarrow \text{Eval}(\pi, q_s, q_g)$;~~$\mathcal{S} \leftarrow \mathcal{S} \cup \{\pi\}$
    \IF{$\ell_\pi < c^*$}
      \STATE $c^* \leftarrow \ell_\pi$;~~$\sigma^* \leftarrow \sigma_\pi$;~~$\text{improved} \leftarrow \text{true}$
    \ENDIF
  \ENDFOR
  \IF{$\lnot\,\text{improved}$}
    \STATE $k' \leftarrow \min(2k', 4k)$
    \IF{$k' = 4k$}
      \STATE \textbf{break}
    \ENDIF
  \ENDIF
\ENDWHILE
\RETURN $(\sigma^*, c^*)$
\end{algorithmic}
\end{algorithm}

\subsection{Theoretical Properties}

\begin{theorem}[Completeness of DIP]
\label{thm:complete}
If a collision-free path from $q_s$ to $q_g$ exists in $\mathcal{F}$, and the decomposition $\mathcal{C}$ covers $\mathcal{F}$ with $q_s, q_g$ contained in cells of $G$, then DIP (Algorithm~\ref{alg:dip}) finds a solution path.
\end{theorem}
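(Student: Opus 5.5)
The plan is to reduce completeness to two facts: the existence of a path in $G$ from $c_s$ to $c_g$, and Phase~1 returning at least one corridor that the evaluator turns into a collision-free path. Phase~2 only replaces $\sigma^*$ when $\ell_\pi < c^*$, so the output is never worse than the Phase~1 result. It therefore suffices to show that Phase~1 alone yields a finite $c^*$.

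\textbf{Step 1 (discrete connectivity).} Take a collision-free path $\sigma:[0,1]\to\mathcal{F}$ from $q_s$ to $q_g$. By the coverage property (i) of Definition~\ref{def:decomp}, every point $\sigma(t)$ lies in some cell. Since $\mathcal{C}$ is finite and its cells are closed, the sets $T_i=\sigma^{-1}(c_i)$ are closed and cover $[0,1]$. Walking along $[0,1]$, whenever $\sigma$ leaves one cell it is in another cell at the same parameter value, so consecutive cells in this walk share a point of $\sigma$. This produces a finite sequence of cells from $c_s$ to $c_g$ in which consecutive cells intersect.

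This is the main obstacle: Definition~\ref{def:cag} only creates a portal when two cells share a $(d{-}1)$-dimensional face, whereas two cells may meet only at a vertex (2D) or an edge (3D). I would handle this by perturbing $\sigma$ within the open set $\mathrm{int}(\mathcal{F})$ so that it avoids the codimension-$\geq 2$ skeleton of the decomposition and crosses cell boundaries only through relative interiors of facets. This is possible because that skeleton has codimension at least two, and removing a codimension-$2$ set does not disconnect an open connected set. For CDT and the Slab boxes the cells form a face-to-face (or at least facet-overlapping) complex, so each such crossing corresponds to a portal in $\mathcal{P}$. Hence $c_s$ and $c_g$ lie in the same connected component of $G$. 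If $\sigma$ grazes obstacle boundaries, I would first use the closure $\overline{\mathcal{F}}$ and argue the perturbation into the interior for a path of positive clearance; this clearance issue is where the argument would need care.

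\textbf{Step 2 (search and evaluation).} Because $G$ is finite, and by Proposition~\ref{prop:weights} all weights $w_{\text{GNN}}$ are strictly positive and finite, Yen's algorithm with $k\ge 1$ on a graph where $c_g$ is reachable from $c_s$ returns at least one simple path $\pi$. No edges are pruned in Phase~1, so the bias introduced by the GNN cannot disconnect the graph. Along $\pi$, consecutive cells share a portal. The Funnel algorithm, or in 3D the portal-face sampling evaluator, returns a polyline that passes through one point on each portal. Each segment of this polyline joins two points of a single convex cell, so by convexity (iii) the segment lies in that cell and hence in $\overline{\mathcal{F}}$. The evaluator's output is therefore a finite-length collision-free path, so $c^*<\infty$ after line 11 of Algorithm~\ref{alg:dip}.

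\textbf{Step 3 (Phase 2 and termination).} The Phase~2 loop terminates, either through the timeout $T$ or because $k'$ reaches $4k$. Since $c^*$ only decreases while staying finite, the algorithm returns a valid $(\sigma^*,c^*)$. As a consistency check, the portals on the current best corridor lie within $\mathcal{E}(c^*)$, since $\|x-q_s\|+\|x-q_g\|$ is at most the length of any path from $q_s$ to $q_g$ through $x$. The restricted graph $G|_{\mathcal{P}'}$ is therefore never empty of solutions, although the theorem itself does not need this.
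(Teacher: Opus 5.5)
Your argument is correct, and its algorithmic core matches the paper's sketch: because $w_{\text{GNN}}$ is strictly positive on every edge, the GNN reweighting removes nothing, so Yen's algorithm returns a corridor whenever $c_g$ is reachable from $c_s$.

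The difference is in what each treats as needing proof. The paper's sketch takes two things for granted: that a path in $\mathcal{F}$ puts $c_s$ and $c_g$ in the same component of $G$, and that evaluating a corridor yields a valid path. It spends its second sentence on Phase~2, using the triangle inequality to show that portals of an optimal corridor always pass the ellipsoid test.

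You do three things instead:
\begin{itemize}
\item You prove the continuous-to-discrete step explicitly. You correctly spot that Definition~\ref{def:cag} gives no portal for vertex or edge contact, and you handle it by pushing the path off the $(d{-}2)$-skeleton.
\item You check by convexity that the Funnel and portal-sampling outputs stay inside the corridor.
\item You dispose of Phase~2 by monotonicity of $c^*$. The ellipsoid fact then becomes an optional remark, applied sensibly to the current best corridor, which is what keeps $G|_{\mathcal{P}'}$ connected.
\end{itemize}

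The paper's version is shorter, and its Phase-2 remark really concerns anytime optimality (informed pruning never cuts off the optimum) rather than completeness. Yours actually establishes the statement as posed.

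The clearance issue you flag resolves itself. With closed obstacles, $\sigma([0,1])$ is a compact subset of $\mathcal{F}$, so it lies at positive distance from the obstacle set, and your perturbation is always available. Conversely, your fallback to $\overline{\mathcal{F}}$ cannot work in general. Two free cells meeting only at a vertex where two obstacles touch receive no portal, even though a path in $\overline{\mathcal{F}}$ passes through that vertex. So the hypothesis that the path lies in $\mathcal{F}$ is genuinely needed.

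Finally, your ``walking along $[0,1]$'' step is made rigorous with last-exit times $t_{m+1}=\max\sigma^{-1}(c^{(m)})$. Each cell is then visited at most once, so the chain is finite.
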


\begin{proof}[Proof sketch]
GNN weights~\eqref{eq:gnn_weight} preserve graph topology (all edges remain with positive weights), so Yen's algorithm can discover any reachable corridor. For Phase~2, any portal on an optimal corridor satisfies the ellipsoid condition by the triangle inequality, so it is never pruned.
\end{proof}

\begin{theorem}[Convergence of DIP]
\label{thm:converge}
The Phase~2 loop of Algorithm~\ref{alg:dip} terminates in finite iterations. Moreover, each iteration either discovers a strictly better corridor (decreasing $c^*$) or explores no new corridors.
\end{theorem}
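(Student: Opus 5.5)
The plan is to prove termination by a potential argument on the finite set of simple corridors, and then read off the dichotomy directly from the loop body of Algorithm~\ref{alg:dip}.

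\textbf{Finiteness of the corridor space.} The cell adjacency graph $G=(\mathcal{C},\mathcal{P})$ is finite, since the decomposition of Definition~\ref{def:decomp} has finitely many cells. Yen's algorithm returns only loopless paths, so every corridor it produces, in Phase~1 or Phase~2 and on any restriction $G|_{\mathcal{P}'}$, is a simple path from $c_s$ to $c_g$ in $G$. Let $N<\infty$ be the number of such simple paths. The evaluated set $\mathcal{S}$ only grows and is always a subset of this finite family, so $|\mathcal{S}|\le N$.

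\textbf{Termination.} I will classify each Phase~2 iteration by the value of the flag \emph{improved}.
\begin{itemize}
\item If improved is true, some $\pi\in\Pi'\setminus\mathcal{S}$ was evaluated and added to $\mathcal{S}$. Hence $|\mathcal{S}|$ strictly increases.
\item If improved is false, $k'$ is updated to $\min(2k',4k)$. Starting from $k'=k$, after at most two consecutive non-improving iterations we get $k'=4k$ and the loop breaks.
\end{itemize}
Improving iterations cannot push $k'$ back down, since $k'$ is never decreased. So non-improving iterations number at most two in total, not merely two consecutively. Improving iterations number at most $N$, because each adds a new element to $\mathcal{S}$. The loop therefore runs at most $N+2$ iterations. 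Evaluation of each corridor also terminates: the Funnel algorithm is $O(n)$, and the 3D layered-graph DP runs over a finite sample set. Since $c^*$ can only take values in the finite set $\{\ell_\pi\}$ of corridor costs, it decreases strictly at most $N$ times.

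\textbf{The dichotomy.} Fix one iteration. If some evaluated $\pi\in\Pi'\setminus\mathcal{S}$ has $\ell_\pi<c^*$, then $c^*$ strictly decreases and a strictly better corridor has been found. Otherwise every evaluated corridor has $\ell_\pi\ge c^*$. In that case the incumbent and $c^*$ are unchanged, and no corridor improving on the solution is discovered.

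The main obstacle is this second case, because the literal claim ``explores no new corridors'' is false as stated. In a non-improving iteration, $\Pi'\setminus\mathcal{S}$ may be nonempty, and those corridors are evaluated and inserted into $\mathcal{S}$. I would therefore state the dichotomy precisely as: each iteration either strictly decreases $c^*$, or leaves $(c^*,\sigma^*)$ unchanged and only enlarges $\mathcal{S}$ or $k'$. Under that reading, ``no new corridors'' means no new \emph{better} corridor. The termination argument does not depend on which reading is used, since the iteration bound above uses only the monotonicity of $\mathcal{S}$ and the cap on $k'$.
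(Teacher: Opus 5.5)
Your proof is correct, and its core is the paper's: the family of loopless corridors is finite, and $\mathcal{S}$ guarantees each corridor is evaluated at most once. Where you differ is the bookkeeping, and your version is the sounder one.

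The paper's sketch asserts that every iteration ``either discovers a new corridor or triggers termination.'' A first non-improving iteration in which Yen returns only already-evaluated corridors does neither; it merely raises $k'$ from $k$ to $2k$. Your split is what actually closes the termination argument. Improving iterations are bounded by the growth of $|\mathcal{S}|\le N$, and non-improving ones by two in total via the monotone cap on $k'$. Together these give the explicit bound $N+2$.

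Your objection to the ``Moreover'' clause is also correct, and the paper's sketch never argues that clause at all. A non-improving iteration can evaluate new corridors without lowering $c^*$; this is exactly what the doubled budget $k'$ is for. So the literal dichotomy is false. Your restatement is what the algorithm actually satisfies: either $c^*$ strictly decreases, or $(c^*,\sigma^*)$ is unchanged and only $\mathcal{S}$ or $k'$ grows.
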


\begin{proof}[Proof sketch]
The number of distinct corridors is finite; each is evaluated at most once. Each iteration either discovers a new corridor or triggers termination, so the loop terminates.
\end{proof}

\subsection{Corridor Evaluation}
\label{sec:funnel}

In 2D, the Funnel algorithm~\cite{lee1984euclidean} computes the exact shortest path through a corridor of $L$ convex polygons in $O(L)$ time via string-pulling. In 3D, we employ portal-face sampling: $N_s$ points are sampled uniformly on each portal face, forming a layered DAG from $q_s$ through portal samples to $q_g$. A forward DP sweep finds the shortest path in $O(L \cdot N_s^2)$ time---collision-free by convexity. Adaptive Gaussian re-sampling refines the path for up to $r=3$ iterations.

\subsection{Complexity and System Design}
\label{sec:system}

Decomposition is $O(n \log n)$ (CDT in 2D) or $O(N_x N_y N_z + M)$ (Slab in 3D). Phase~1 runs Yen's $k$-shortest paths~\cite{yen1971finding} in $O(k \cdot |\mathcal{C}| \cdot (|\mathcal{P}| + |\mathcal{C}| \log |\mathcal{C}|))$; each Funnel evaluation is $O(L)$. Phase~2 operates on progressively smaller ellipsoid-filtered subgraphs. GNN inference is $O(L_\text{GNN} \cdot (|\mathcal{P}| \cdot h + |\mathcal{C}| \cdot h^2))$ with $L_\text{GNN}=3$, $h=128$.

The system comprises a C++ planning core ($\sim$5K LOC, OMPL-integrated~\cite{sucan2012ompl}) and a Python GNN module (PyTorch~\cite{paszke2019pytorch} + PyG~\cite{fey2019fast}, $\sim$150K parameters). GNN inference adds 10--50\,ms latency. Default corridor budget: $k=8$ in 2D, $k=16$ ($\mathcal{G}$-DIP) or $k=32$ (unguided DIP) in 3D; 3D portal-face sampling uses $N_s=16$, refinement iterations $r=3$.

\section{Experiments}
\label{sec:experiments}

GNN-DIP is evaluated against unguided DIP and OMPL baselines (best of iRRT*, AIT*, BIT*, EIT*, RRT*) in 2D and 3D, on a single thread of an Intel i7 processor.

\subsection{2D Evaluation}

The 2D benchmark uses 310 polygon maps across 18 scenarios in four complexity tiers by CDT cell count: simple (14--74), medium (80--164), hard (280--672), and very hard (764--2372 cells). With $k{=}8$ and Funnel evaluation, DIP achieves an 89.5\% win rate against OMPL at 10\,ms on simple--hard maps but only 33\% on very hard maps (1000+ cells) due to combinatorial corridor explosion. GNN guidance addresses this: on mega forest (1074 cells), unguided DIP fails while GNN-DIP succeeds with cost 1.295 (vs.\ 1.293 for OMPL); on tight labyrinth (1046 cells), both DIP methods achieve cost 1.737, outperforming OMPL's 1.799.

\paragraph{Decomposition Guarantees Full Reliability on 2D Narrow Passages}
Table~\ref{tab:benchmark} reports PDT~\cite{strub2020pdt} results (100 runs, 2\,s budget) on four very hard 2D scenarios. DIP and $\mathcal{G}$-DIP achieve 100\% success on all scenarios. On Bottleneck, all four sampling-based baselines fall below 3\% success; on Tight Labyrinth, only EIT* reaches 47\%. On Mega Forest, EIT* attains 99\% success but at a median cost of 1.51---17\% higher than $\mathcal{G}$-DIP's 1.29.

\paragraph{GNN Scoring Provides Targeted Speedup on Combinatorially Hard Maps}
$\mathcal{G}$-DIP reduces initial solve time by 4.6$\times$ on Mega Forest (48\,ms vs.\ 223\,ms) and 3.3$\times$ on Bottleneck (158\,ms vs.\ 516\,ms). On Bottleneck, $\mathcal{G}$-DIP also reduces median cost from 1.52 to 1.33 (12.5\%), indicating that GNN-selected corridors are closer to optimal. On Tight Labyrinth and Cluttered Field, where DIP already solves in 16\,ms and 35\,ms, $\mathcal{G}$-DIP matches both cost and latency.

\begin{table}[t]
\centering
\caption{PDT benchmark. Top: SR/median cost. Bottom: time to first solution. 2D: 100 runs, 2\,s; 3D: 50 runs, 20\,s. D-BN = Dense Bottleneck. Bold = best; ``---'' = no solution.}
\label{tab:benchmark}
\resizebox{0.985\columnwidth}{!}{%
\scriptsize
\setlength{\tabcolsep}{0.2em}
\renewcommand{\arraystretch}{0.95}
\begin{tabular}{lcccccc}
\toprule
 & DIP & $\mathcal{G}$-DIP & BIT* & AIT* & iRRT* & EIT* \\
\midrule
\multicolumn{7}{l}{\emph{Success rate / median cost}} \\
\addlinespace[1pt]
MegaFor. & \textbf{1.0}/1.30 & \textbf{1.0}/\textbf{1.29} & --- & --- & .02 & .99/1.51 \\
Bottlnk. & \textbf{1.0}/1.52 & \textbf{1.0}/\textbf{1.33} & .00 & .00 & .00 & .03 \\
TightLab & \textbf{1.0}/\textbf{1.74} & \textbf{1.0}/\textbf{1.74} & --- & .00 & .00 & .47 \\
Clutterd & \textbf{1.0}/\textbf{1.32} & \textbf{1.0}/\textbf{1.32} & --- & --- & .26 & .99/1.34 \\
\addlinespace[2pt]
BN Ofc. & \textbf{1.0}/2.02 & \textbf{1.0}/2.02 & .98/2.18 & .02/2.82 & \textbf{1.0}/2.06 & \textbf{1.0}/\textbf{1.90} \\
BN Maze & \textbf{1.0}/2.19 & \textbf{1.0}/2.19 & \textbf{1.0}/2.40 & .24/2.88 & \textbf{1.0}/2.26 & \textbf{1.0}/\textbf{1.78} \\
BN Lyrs & \textbf{1.0}/2.34 & \textbf{1.0}/2.34 & \textbf{1.0}/2.67 & .00 & .78/2.47 & \textbf{1.0}/\textbf{1.82} \\
\addlinespace[1pt]
D-BN Ofc. & \textbf{1.0}/2.34 & \textbf{1.0}/2.34 & \textbf{1.0}/2.56 & .00 & \textbf{1.0}/2.40 & \textbf{1.0}/\textbf{1.90} \\
\midrule
\multicolumn{7}{l}{\emph{Time to first solution}} \\
\addlinespace[1pt]
MegaFor. & 223\,ms & \textbf{48\,ms} & --- & --- & --- & 1.14\,s \\
Bottlnk. & 516\,ms & \textbf{158\,ms} & --- & --- & --- & --- \\
TightLab & \textbf{16\,ms} & 19\,ms & --- & --- & --- & --- \\
Clutterd & 35\,ms & \textbf{22\,ms} & --- & --- & --- & 508\,ms \\
\addlinespace[2pt]
BN Ofc. & \textbf{18\,ms} & 19\,ms & 0.47\,s & 8.19\,s & 3.24\,s & 0.10\,s \\
BN Maze & 29\,ms & \textbf{19\,ms} & 0.11\,s & 9.75\,s & 1.73\,s & 90\,ms \\
BN Lyrs & 48\,ms & \textbf{38\,ms} & 0.39\,s & --- & 9.77\,s & 73\,ms \\
\addlinespace[1pt]
D-BN Ofc. & 0.26\,s & 0.12\,s & 0.43\,s & --- & 5.86\,s & \textbf{0.17\,s} \\
\bottomrule
\end{tabular}
}%
\end{table}

Fig.~\ref{fig:convergence} shows the convergence plots.

\subsection{3D Bottleneck Benchmark}
\label{sec:complex3d}

To stress-test narrow-passage planning in 3D, we design four bottleneck scenarios where all feasible paths traverse walls with a single narrow door (width 0.035--0.05 in a unit cube)---a regime where the $\varepsilon^3$ hit probability makes sampling-based discovery exponentially hard, while Slab cells capture every door exactly. \emph{Bottleneck Office}: $4{\times}4$ rooms with one narrow door per wall, a floor partition, and 30 clutter boxes (181--190 obstacles); \emph{Bottleneck Maze}: a recursive-division maze with single-door walls, two vertical zones, one floor partition, and 25 clutter boxes (129--175); \emph{Bottleneck Layers}: three layers of $3{\times}3$ rooms with narrow doors and floor holes (radius 0.04--0.05), plus 30 clutter boxes (239--246); \emph{Dense BN Office}: BN Office with 120 extra clutter boxes, yielding $\sim$600 cells and $\sim$1600 portals (vs.\ $\sim$200 originally), where unguided $k$-shortest enumeration becomes the bottleneck.

All six planners are evaluated via PDT (50 runs, 20\,s budget) on the most challenging map per scenario.

\paragraph{DIP Maintains Perfect Reliability Across All 3D Scenarios}
Table~\ref{tab:benchmark} reports all results. DIP and $\mathcal{G}$-DIP maintain 100\% success on all four scenarios, including the dense variant with $\sim$600 cells. AIT* fails entirely on BN Layers and the dense variant (0\%), iRRT* drops to 78\% on BN Layers, and BIT* to 98\% on BN Office. EIT* sustains 100\% across all scenarios but requires the full 20\,s budget.

\paragraph{Speed--Quality Tradeoff Between DIP and Asymptotic Planners}
DIP produces initial solutions in 18--48\,ms, compared to 110--470\,ms for BIT* (4--26$\times$ slower) and 73--100\,ms for EIT*. EIT* achieves lower median costs (1.78--1.90 vs.\ DIP's 2.02--2.34) through asymptotic refinement; DIP trades this for immediate availability. On the dense variant, $\mathcal{G}$-DIP solves in 0.12\,s---2.2$\times$ faster than DIP (0.26\,s) and 3.6$\times$ faster than BIT* (0.43\,s); convergence plots (Fig.~\ref{fig:convergence}) confirm $\mathcal{G}$-DIP converges 2$\times$ faster on this variant.

\paragraph{Neural Corridor Scoring Reduces the Effective Branching Factor}
A $k$-sweep ablation on Dense BN Office (5 maps $\times$ 5 seeds) confirms that $\mathcal{G}$-DIP at $k{=}8$ (cost 2.369; init 64\,ms; total 2011\,ms) matches DIP at $k{=}32$ (2.346; 419\,ms; 6940\,ms): a 1.0\% cost increase buys a 3.5$\times$ total and 6.5$\times$ initialization speedup, showing that GNN scores cut the number of corridors that must be enumerated by ${\sim}4\times$.

\begin{figure*}[!t]
\centering
\begin{subfigure}[t]{0.245\textwidth}
  \centering
  \includegraphics[width=0.96\textwidth]{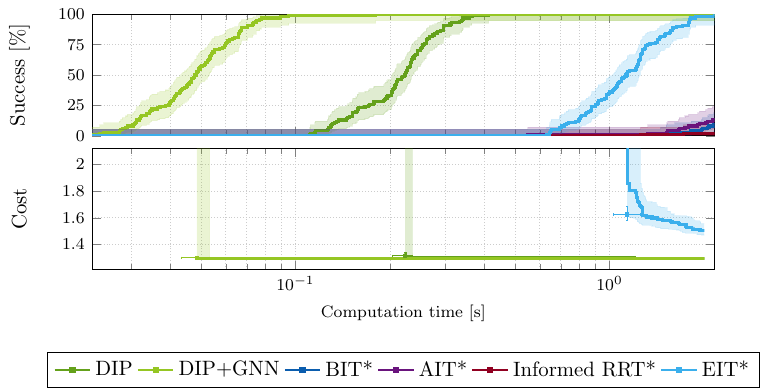}
  \caption{Mega Forest (2D)}
\end{subfigure}\hfill
\begin{subfigure}[t]{0.245\textwidth}
  \centering
  \includegraphics[width=0.96\textwidth]{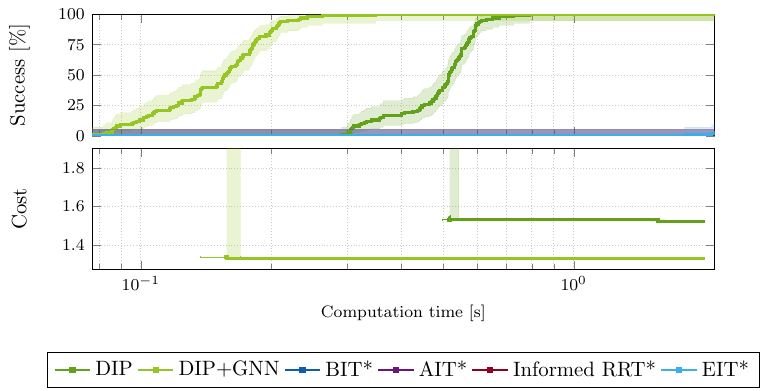}
  \caption{Bottleneck (2D)}
\end{subfigure}\hfill
\begin{subfigure}[t]{0.245\textwidth}
  \centering
  \includegraphics[width=0.96\textwidth]{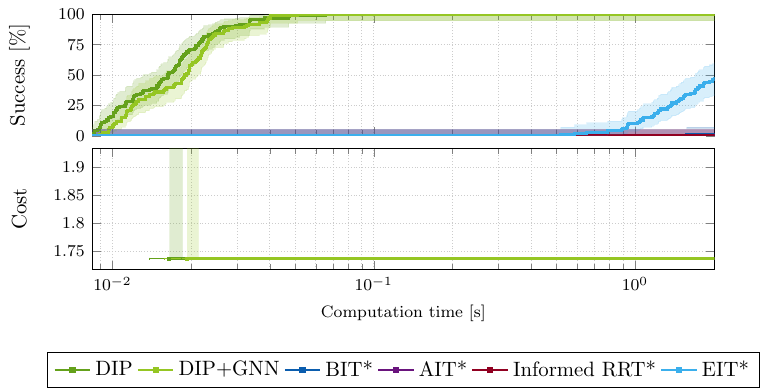}
  \caption{Tight Labyrinth (2D)}
\end{subfigure}\hfill
\begin{subfigure}[t]{0.245\textwidth}
  \centering
  \includegraphics[width=0.96\textwidth]{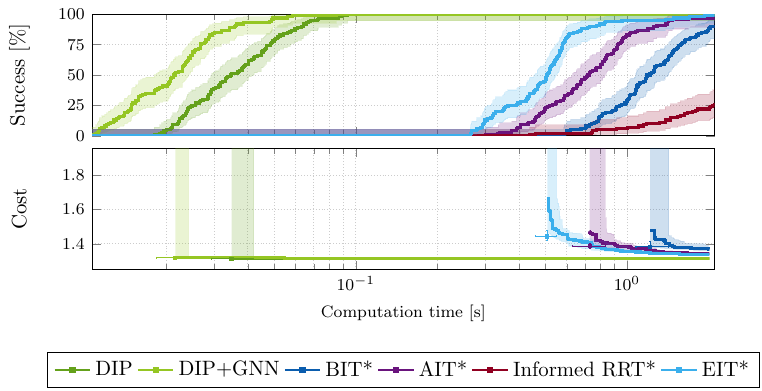}
  \caption{Cluttered Field (2D)}
\end{subfigure}\\[0.5em]
\begin{subfigure}[t]{0.245\textwidth}
  \centering
  \includegraphics[width=0.96\textwidth]{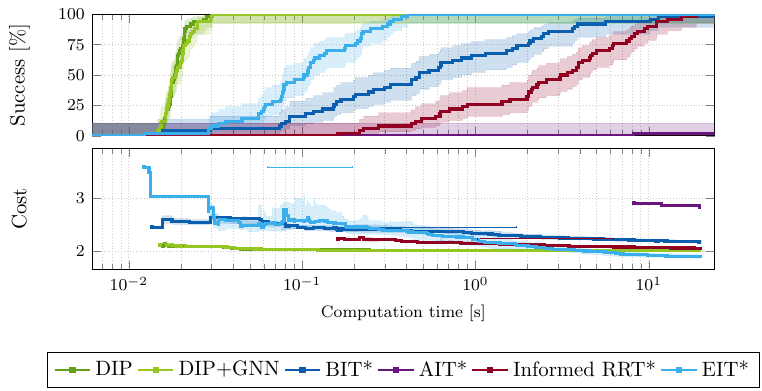}
  \caption{BN Office (3D)}
\end{subfigure}\hfill
\begin{subfigure}[t]{0.245\textwidth}
  \centering
  \includegraphics[width=0.96\textwidth]{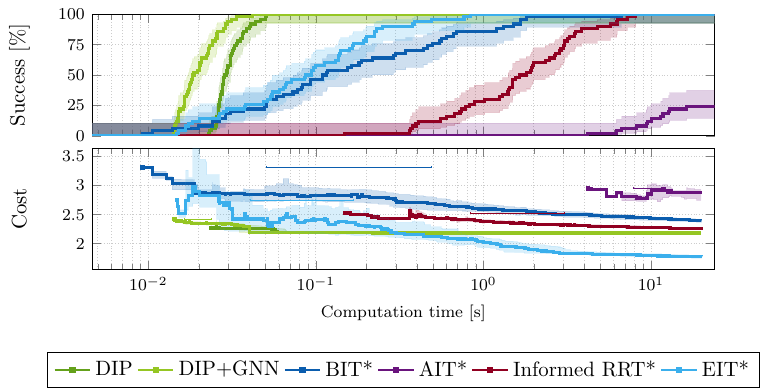}
  \caption{BN Maze (3D)}
\end{subfigure}\hfill
\begin{subfigure}[t]{0.245\textwidth}
  \centering
  \includegraphics[width=0.96\textwidth]{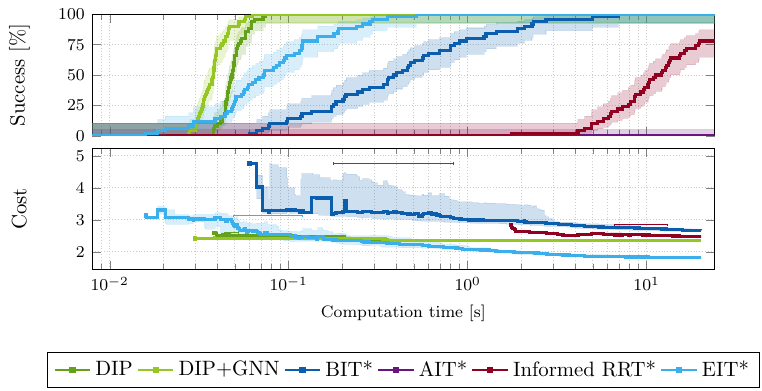}
  \caption{BN Layers (3D)}
\end{subfigure}\hfill
\begin{subfigure}[t]{0.245\textwidth}
  \centering
  \includegraphics[width=0.96\textwidth]{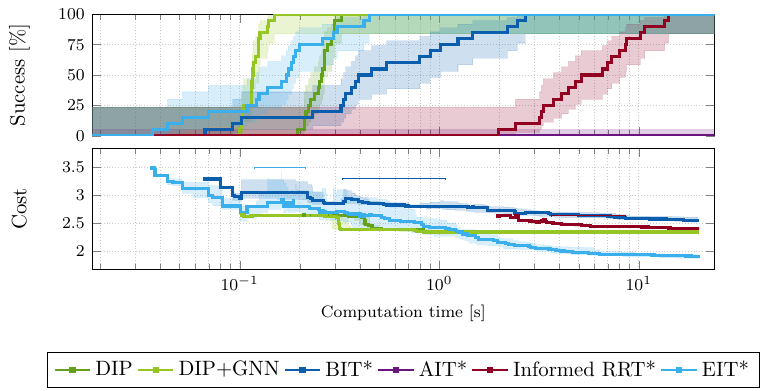}
  \caption{Dense BN Ofc.\ (3D)}
\end{subfigure}
\caption{PDT convergence plots: success rate (top) and median cost (bottom) vs.\ time. \emph{Top row}: 2D very hard scenarios (100 runs, 2\,s). \emph{Bottom row}: 3D bottleneck scenarios (50 runs, 20\,s) and Dense BN Office ($\sim$600 cells).}
\label{fig:convergence}
\end{figure*}

\subsection{Cross-Scenario Generalization}
\label{sec:generalization}

To evaluate whether GNN portal scoring generalizes beyond its training distribution, we conduct two transfer experiments on seven 3D bottleneck scenarios---the four from Sec.~\ref{sec:complex3d} plus Dense BN Maze ($\sim$450 cells) and two individual unseen maps (BN Office \#15, BN Maze \#20). Each is tested with 50 runs and a 20\,s budget.

\paragraph{Leave-One-Type-Out (LOTO)}
Training data spans four scenario families: office, maze, layers, and warehouse (261 samples across 14 subtypes). For each family $f$, we train a LOTO model $\mathcal{G}_{\neg f}$ on all data \emph{excluding} family $f$ and evaluate it on scenarios from $f$.

\paragraph{Simple-to-Complex Transfer}
A model $\mathcal{G}_\text{sim}$ is trained exclusively on four basic scenario types (forest, narrow passage, multi-room, cluttered)---none containing bottleneck structures---and tested on all complex bottleneck scenarios.

\begin{table}[t]
\centering
\caption{Cross-scenario generalization. LOTO$_{\neg f}$: trained excluding the test scenario's family $f$. $\mathcal{G}_\text{sim}$: trained on basic scenarios only. All methods achieve 100\% SR. (50 runs, 20\,s.)}
\label{tab:generalization}
\resizebox{0.985\columnwidth}{!}{%
\scriptsize
\setlength{\tabcolsep}{0.25em}
\renewcommand{\arraystretch}{0.95}
\begin{tabular}{lcccc}
\toprule
 & DIP & $\mathcal{G}$-DIP & LOTO$_{\neg f}$ & $\mathcal{G}_\text{sim}$-DIP \\
\midrule
\multicolumn{5}{l}{\emph{Median cost}} \\
\addlinespace[1pt]
BN Ofc.       & 2.02 & 2.02 & 2.02 & 2.02 \\
BN Maze       & 2.19 & 2.19 & 2.19 & 2.18 \\
BN Lyrs       & 2.34 & 2.34 & 2.34 & 2.34 \\
D-BN Ofc.     & 2.34 & 2.34 & 2.34 & 2.34 \\
D-BN Maze     & \textbf{1.86} & \textbf{1.86} & 2.11 & 2.04 \\
\addlinespace[1pt]
Ofc.\,\#15    & 2.88 & \textbf{2.76} & \textbf{2.76} & \textbf{2.76} \\
Maze\,\#20    & 2.79 & \textbf{2.67} & \textbf{2.67} & 2.67 \\
\midrule
\multicolumn{5}{l}{\emph{Time to first solution}} \\
\addlinespace[1pt]
BN Ofc.       & 14\,ms  & 14\,ms  & \textbf{13\,ms}  & 14\,ms  \\
BN Maze       & 19\,ms  & \textbf{11\,ms}  & 13\,ms  & 12\,ms  \\
BN Lyrs       & 35\,ms  & 26\,ms  & 28\,ms  & \textbf{25\,ms}  \\
D-BN Ofc.     & 185\,ms & 88\,ms  & \textbf{78\,ms}  & 100\,ms \\
D-BN Maze     & 167\,ms & 125\,ms & 129\,ms & \textbf{73\,ms}  \\
\addlinespace[1pt]
Ofc.\,\#15    & 44\,ms  & 26\,ms  & 25\,ms  & \textbf{23\,ms}  \\
Maze\,\#20    & 48\,ms  & \textbf{21\,ms}  & 22\,ms  & 23\,ms  \\
\bottomrule
\end{tabular}
}%
\end{table}

Table~\ref{tab:generalization} shows that LOTO models match the full model within 0.1\% cost in 6 of 7 scenarios with equivalent or faster solve times. The single degradation, Dense BN Maze (+13\% when maze data is excluded), is attributable to maze-specific structure. $\mathcal{G}_\text{sim}$ matches or improves DIP solve times on all bottleneck scenarios (up to 2.3$\times$ speedup on dense variants) despite never encountering bottleneck structures during training, confirming that spatial features (distance, clearance, connectivity) transfer effectively to complex layouts.

\subsection{Dynamic 2D Evaluation}
\label{sec:dynamic2d}

We evaluate GNN-DIP for high-frequency replanning in dynamic 2D environments. Ten scenarios each consist of 10 time steps (100 instances total), with $\sim$50\% static, 30\% moving, and 20\% toggling obstacles (15--58 per step, 96--358 CDT cells). GNN-DIP executes the full pipeline (CDT + GNN + DIP) per step; OMPL runs five planners and selects the best valid result. Both use a 0.5\,s budget. OMPL paths are post-validated via dense collision checking (200 samples/unit); only collision-free paths count as successes.

\paragraph{GNN-DIP Dominates Dynamic Replanning in Reliability, Latency, and Cost}
GNN-DIP achieves 99\% success (99/100; the single failure is genuinely unsolvable) vs.\ OMPL's 40\% after collision post-validation, solving each step in 1.8--44\,ms (50--280$\times$ speedup) including the pipeline overhead (CDT $\sim$10\,ms + GNN $\sim$5\,ms). Fig.~\ref{fig:dynamic_summary} shows consistently lower path costs on all 10 scenarios, with the largest margins (6--8\%) on multi-room environments.

\begin{figure*}[t]
\centering
\includegraphics[width=0.88\textwidth]{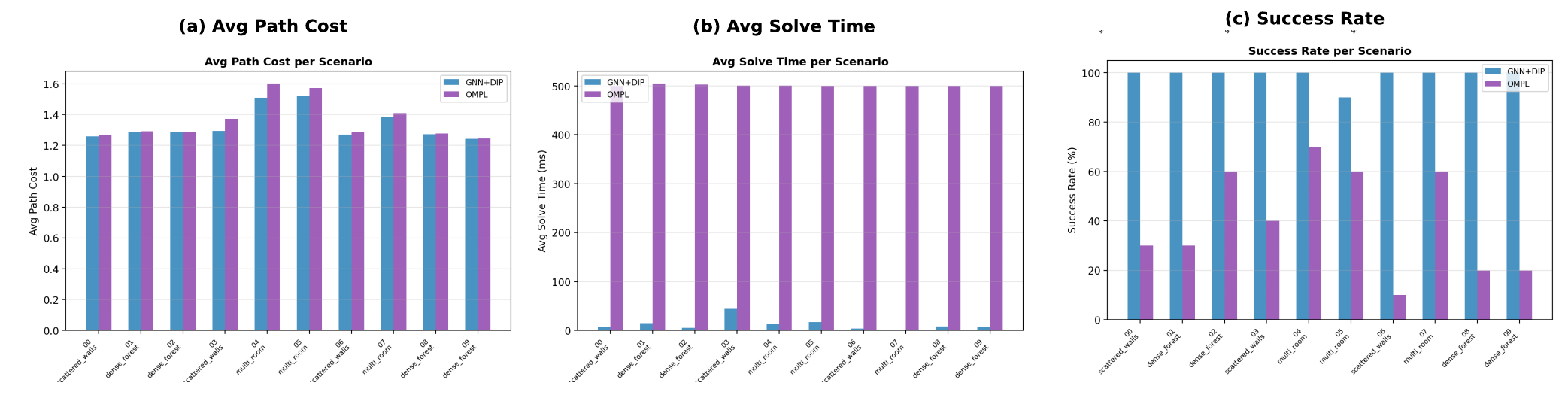}
\caption{Dynamic 2D benchmark across 10 scenarios: (a)~average path cost, (b)~average solve time (GNN-DIP 1.8--44\,ms vs.\ OMPL's 0.5\,s budget), (c)~success rate after collision post-validation (99\% vs.\ 40\%).}
\label{fig:dynamic_summary}
\end{figure*}

\paragraph{Collision Safety by Construction}
The success gap reflects a fundamental architectural difference. To quantify this, we measure OMPL's pre-validation success rate (planner finds \emph{any} path) and post-validation rate (path survives dense collision checking) at two motion-validation resolutions: the default (${\sim}1\%$ of space extent) and $2{\times}$ ($0.5\%$, which doubles the per-edge checking cost). Within the same 0.5\,s budget, pre-validation success is unchanged (489/500 vs.\ 488/500 individual planner runs), confirming that the planning algorithms succeed and the overhead is negligible. However, post-validation success rises from 36\% to 79\%---yet 21\% of steps still contain paths that penetrate thin walls. Raising resolution further would reduce violations at the cost of exploring fewer edges per budget. DIP needs no such tuning: CDT cells partition free space along obstacle boundaries, so corridor paths are collision-free by construction.

\section{Discussion and Conclusion}
\label{sec:discussion}

DIP exploits geometric structure for deterministic, fast initial solutions, while sampling-based planners offer asymptotic optimality at reduced reliability in narrow passages. GNN guidance matters most as complexity grows: the benefit is modest at $\sim$200 cells but reaches a $4\times$ branching-factor reduction at $\sim$600+ cells, and cross-scenario generalization (Table~\ref{tab:generalization}) confirms the learned features are not layout-specific.

\subsection{Extension: CBF-Guarded Execution}
\label{sec:cbf}

DIP produces collision-free \emph{point} paths; for a disk robot of radius~$r$, wall clearance must be enforced at runtime. Rather than inflating obstacles (requiring re-decomposition) or shrinking portals (over-conservative), we define a CBF~\cite{ames2017cbf} on each corridor wall with endpoints $(w_1, w_2)$:
\begin{equation}
h(q) = \operatorname{dist}\bigl((x,y),\; \overline{w_1 w_2}\bigr) - r,
\label{eq:barrier}
\end{equation}
where $\{q : h(q) \geq 0\}$ is the safe region. The corridor structure suits CBF integration for two reasons: (i)~\emph{sparse constraints}---only walls of the current and neighboring corridor cells are monitored (${\leq}\,4$ per step), far fewer than whole-space formulations; (ii)~\emph{mostly-passive monitoring}---in a typical cell the path stays in the convex interior where $h(q) \gg 0$ and the nominal controller runs unmodified (Fig.~\ref{fig:cbf_triangle}a), with intervention only in the few narrow cells where $h \approx r$ (Fig.~\ref{fig:cbf_triangle}b). The filter clamps forward speed to $v \leq \gamma h / |a|$ when heading toward a wall, preserving forward invariance without a QP solver.

\begin{figure}[t]
\centering
\includegraphics[width=\columnwidth]{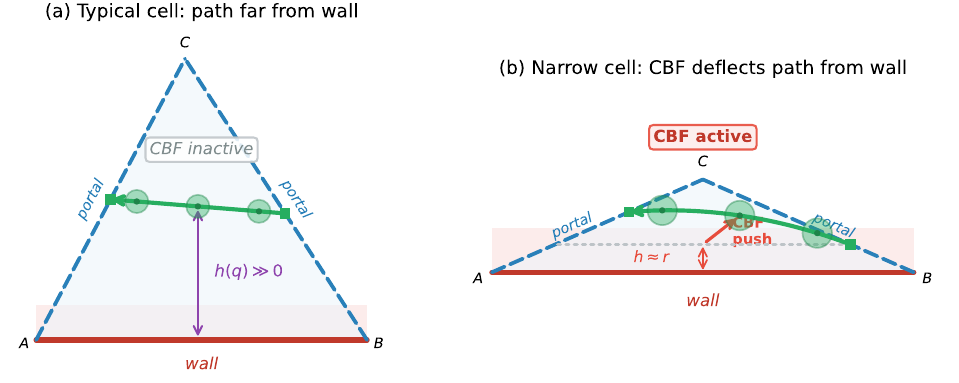}
\caption{CBF behavior within CDT cells. (a)~Typical cell: path traverses the convex interior far from the wall, $h(q) \gg 0$, CBF inactive. (b)~Narrow cell near a bottleneck: path forced close to wall, $h \approx r$, CBF activates to enforce clearance.}
\label{fig:cbf_triangle}
\end{figure}

\subsection{Limitations and Scope}
\label{sec:limitations}

\emph{Workspace vs.\ configuration space.} GNN-DIP operates in the low-dimensional workspace $\mathcal{W}\subseteq\mathbb{R}^d$ ($d\in\{2,3\}$) for point/disk robots, and its guarantees rest on an \emph{exact} convex decomposition of free space. This does not lift directly to the high-dimensional C-spaces of articulated systems (e.g., 6--7-DOF arms): C-space obstacles are curved and non-convex, no exact convex decomposition is known, and the cell count grows rapidly with dimension. The method thus targets low-DOF holonomic and workspace planning. To lift it to configuration space, we plan to follow the Graphs of Convex Sets line of work~\cite{marcucci2023gcs} and replace the exact decomposition with an \emph{approximate convex covering}: we adopt the region-inflation algorithm IRIS~\cite{deits2015iris}---more precisely, its extension to configuration spaces with nonconvex obstacles (IRIS-NP)~\cite{petersen2023irisnp}---to generate a sparse set of large, possibly overlapping convex regions directly in C-space, on which our GNN portal scoring then guides the shortest-path search over the resulting region graph. This trades the constructive completeness of an exact decomposition for scalability, pairing GCS-style regions with learned corridor selection.

\emph{3D obstacle geometry.} Slab decomposition is exact only for axis-aligned box (AABB) obstacles, so general polyhedral, curved, or non-convex geometries are unsupported and the 3D results should be read under this restriction; the 2D CDT pipeline already admits arbitrary simple polygons. A general 3D pipeline could substitute an approximate convex decomposition (e.g., V-HACD~\cite{mamou2009vhacd}), splitting each obstacle into convex pieces on which the same graph, portal features, and GNN scoring apply. The trade-offs are a larger, geometry-dependent cell count, general convex-polygon portals in place of rectangular faces, and loss of exactness where curved boundaries are approximated.

\subsection{Concluding Remarks}

GNN-DIP integrates GNN portal scoring with a two-phase decomposition-informed planner, with formal completeness and convergence guarantees. Across static 2D (310 maps), 3D bottleneck (129--246 obstacles), and dynamic 2D benchmarks, it achieves 99--100\% success with 2--280$\times$ speedups over sampling-based baselines, and GNN guidance matches unguided search quality at a $4\times$ smaller corridor budget.

Future work will address gradient-based 3D path refinement within convex corridors, the C-space and general-geometry extensions outlined in Sec.~\ref{sec:limitations}, C++/ONNX integration for sub-millisecond inference, and experimental validation of the CBF execution layer.



\begin{thebibliography}{30}
\scriptsize

\bibitem{karaman2011sampling}
S.~Karaman and E.~Frazzoli,
``Sampling-based algorithms for optimal motion planning,''
\emph{Int. J. Robot. Res.}, vol.~30, no.~7, pp.~846--894, 2011.

\bibitem{gammell2015batch}
J.~D.~Gammell, S.~S.~Srinivasa, and T.~D.~Barfoot,
``Batch informed trees (BIT*): Sampling-based optimal planning via the heuristically guided search of implicit random geometric graphs,''
in \emph{Proc. IEEE Int. Conf. Robot. Autom. (ICRA)}, 2015, pp.~3067--3074.

\bibitem{gammell2014informed}
J.~D.~Gammell, S.~S.~Srinivasa, and T.~D.~Barfoot,
``Informed RRT*: Optimal sampling-based path planning focused via direct sampling of an admissible ellipsoidal heuristic,''
in \emph{Proc. IEEE/RSJ Int. Conf. Intell. Robots Syst. (IROS)}, 2014, pp.~2997--3004.

\bibitem{yen1971finding}
J.~Y.~Yen,
``Finding the K shortest loopless paths in a network,''
\emph{Management Science}, vol.~17, no.~11, pp.~712--716, 1971.

\bibitem{lee1984euclidean}
D.~T.~Lee and F.~P.~Preparata,
``Euclidean shortest paths in the presence of rectilinear barriers,''
\emph{Networks}, vol.~14, no.~3, pp.~393--410, 1984.


\bibitem{latombe1991robot}
J.-C.~Latombe,
\emph{Robot Motion Planning}, vol.~124 of \emph{The Springer International Series in Engineering and Computer Science}.
Springer Science \& Business Media, 2012.

\bibitem{shewchuk1996triangle}
J.~R.~Shewchuk,
``Triangle: Engineering a 2D quality mesh generator and Delaunay triangulator,''
in \emph{Proc. 1st Workshop Appl. Comput. Geom.}, 1996, pp.~203--222.

\bibitem{cgal2024}
The~CGAL~Project,
\emph{CGAL User and Reference Manual}, 6.0.1~ed.
CGAL Editorial Board, 2024.

\bibitem{strub2020pdt}
M.~P.~Strub and J.~D.~Gammell,
``Adaptively Informed Trees (AIT*) and Effort Informed Trees (EIT*): Asymmetric bidirectional sampling-based path planning,''
\emph{Int. J. Robot. Res.}, vol.~41, no.~4, pp.~390--417, 2022.

\bibitem{sucan2012ompl}
I.~A.~\c{S}ucan, M.~Moll, and L.~E.~Kavraki,
``The Open Motion Planning Library,''
\emph{IEEE Robot. Autom. Mag.}, vol.~19, no.~4, pp.~72--82, 2012.

\bibitem{ichter2018learning}
B.~Ichter, J.~Harrison, and M.~Pavone,
``Learning sampling distributions for robot motion planning,''
in \emph{Proc. IEEE Int. Conf. Robot. Autom. (ICRA)}, 2018, pp.~7087--7094.

\bibitem{wang2020neural}
J.~Wang, W.~Chi, C.~Li, C.~Wang, and M.~Q.-H.~Meng,
``Neural RRT*: Learning-based optimal path planning,''
\emph{IEEE Trans. Autom. Sci. Eng.}, vol.~17, no.~4, pp.~1748--1758, 2020.

\bibitem{chen2020learning}
B.~Chen, B.~Dai, Q.~Lin, G.~Ye, H.~Liu, and L.~Song,
``Learning to plan in high dimensions via neural exploration-exploitation trees,''
in \emph{Proc. Int. Conf. Learn. Representations (ICLR)}, 2020.

\bibitem{ichter2020learned}
B.~Ichter and M.~Pavone,
``Robot motion planning in learned latent spaces,''
\emph{IEEE Robot. Autom. Lett.}, vol.~4, no.~3, pp.~2407--2414, 2019.

\bibitem{yu2021reducing}
C.~Yu and S.~Gao,
``Reducing collision checking for sampling-based motion planning using graph neural networks,''
in \emph{Proc. Adv. Neural Inf. Process. Syst. (NeurIPS)}, vol.~34, 2021, pp.~4274--4289.

\bibitem{qureshi2021nerp}
A.~H.~Qureshi, Y.~Miao, A.~Simeonov, and M.~C.~Yip,
``Motion planning networks: Bridging the gap between learning-based and classical motion planners,''
\emph{IEEE Trans. Robot.}, vol.~37, no.~1, pp.~48--66, 2021.

\bibitem{zang2023graphmp}
X.~Zang, M.~Yin, J.~Xiao, S.~Zonouz, and B.~Yuan,
``GraphMP: Graph neural network-based motion planning with efficient graph search,''
in \emph{Proc. Adv. Neural Inf. Process. Syst. (NeurIPS)}, vol.~36, 2023.

\bibitem{kipf2017semi}
T.~N.~Kipf and M.~Welling,
``Semi-supervised classification with graph convolutional networks,''
in \emph{Proc. Int. Conf. Learn. Representations (ICLR)}, 2017.

\bibitem{lin2017focal}
T.-Y.~Lin, P.~Goyal, R.~Girshick, K.~He, and P.~Doll\'{a}r,
``Focal loss for dense object detection,''
in \emph{Proc. IEEE Int. Conf. Comput. Vision (ICCV)}, 2017, pp.~2999--3007.

\bibitem{kingma2015adam}
D.~P.~Kingma and J.~Ba,
``Adam: A method for stochastic optimization,''
in \emph{Proc. Int. Conf. Learn. Representations (ICLR)}, 2015.

\bibitem{paszke2019pytorch}
A.~Paszke \emph{et al.},
``PyTorch: An imperative style, high-performance deep learning library,''
in \emph{Proc. Adv. Neural Inf. Process. Syst. (NeurIPS)}, 2019, pp.~8024--8035.

\bibitem{fey2019fast}
M.~Fey and J.~E.~Lenssen,
``Fast graph representation learning with PyTorch Geometric,''
in \emph{ICLR Workshop on Representation Learning on Graphs and Manifolds}, 2019.

\bibitem{xie2025informed}
P.~Xie, J.~Betz, and A.~Alanwar,
``Informed hybrid zonotope-based motion planning algorithm,''
\emph{arXiv preprint arXiv:2507.09309}, 2025.

\bibitem{ames2017cbf}
A.~D.~Ames, X.~Xu, J.~W.~Grizzle, and P.~Tabuada,
``Control barrier function based quadratic programs for safety critical systems,''
\emph{IEEE Trans. Autom. Control}, vol.~62, no.~8, pp.~3861--3876, 2017.

\bibitem{zhang2024review}
L.~Zhang, K.~Cai, Z.~Sun, Z.~Bing, C.~Wang, L.~Figueredo, S.~Haddadin, and A.~Knoll,
``Motion planning for robotics: A review for sampling-based planners,''
\emph{Biomimetic Intell. Robot.}, vol.~5, no.~1, p.~100207, 2025.

\bibitem{zhang25graft}
L.~Zhang, Y.~Ling, Z.~Bing, F.~Wu, S.~Haddadin, and A.~Knoll,
``Tree-based grafting approach for bidirectional motion planning with local subsets optimization,''
\emph{IEEE Robot. Autom. Lett.}, vol.~10, no.~6, pp.~5815--5822, 2025.

\bibitem{zhang2025TASE}
L.~Zhang, K.~Cai, Y.~Zhang, Z.~Bing, C.~Wang, F.~Wu, S.~Haddadin, and A.~Knoll,
``Estimated informed anytime search for sampling-based planning via adaptive sampler,''
\emph{IEEE Trans. Autom. Sci. Eng.}, vol.~22, pp.~18580--18593, 2025.

\bibitem{zhang2025git}
L.~Zhang, K.~Cai, Z.~Bing, C.~Wang, and A.~Knoll,
``Genetic informed trees (GIT*): Path planning via reinforced genetic programming heuristics,''
\emph{Biomimetic Intell. Robot.}, vol.~5, no.~3, p.~100237, 2025.

\bibitem{zhang2025apt}
L.~Zhang, S.~Wang, K.~Cai, Z.~Bing, F.~Wu, C.~Wang, S.~Haddadin, and A.~Knoll,
``APT*: Asymptotically optimal motion planning via adaptively prolated elliptical r-nearest neighbors,''
\emph{IEEE Robot. Autom. Lett.}, vol.~10, no.~10, pp.~10242--10249, 2025.

\bibitem{marcucci2023gcs}
T.~Marcucci, M.~Petersen, D.~von Wrangel, and R.~Tedrake,
``Motion planning around obstacles with convex optimization,''
\emph{Sci. Robot.}, vol.~8, no.~84, p.~eadf7843, 2023.

\bibitem{marcucci2024shortest}
T.~Marcucci, J.~Umenberger, P.~A.~Parrilo, and R.~Tedrake,
``Shortest paths in graphs of convex sets,''
\emph{SIAM J. Optim.}, vol.~34, no.~1, pp.~507--532, 2024.

\bibitem{mamou2009vhacd}
K.~Mamou and F.~Ghorbani,
``A simple and efficient approach for 3D mesh approximate convex decomposition,''
in \emph{Proc. IEEE Int. Conf. Image Process. (ICIP)}, 2009, pp.~3501--3504.

\bibitem{deits2015iris}
R.~Deits and R.~Tedrake,
``Computing large convex regions of obstacle-free space through semidefinite programming,''
in \emph{Algorithmic Foundations of Robotics XI (WAFR)}. Springer, 2015, pp.~109--124.

\bibitem{petersen2023irisnp}
M.~Petersen and R.~Tedrake,
``Growing convex collision-free regions in configuration space using nonlinear programming,''
\emph{arXiv preprint arXiv:2303.14737}, 2023.

\end{thebibliography}
\end{document}